\documentclass[11pt]{article}

\usepackage[letterpaper,margin=1in]{geometry}

\usepackage[utf8]{inputenc}
\usepackage[T1]{fontenc}
\usepackage{natbib}
\usepackage{hyperref}
\usepackage{url}
\usepackage{booktabs}
\usepackage{amsmath,amsfonts,amssymb,amsthm}
\usepackage{mathtools}
\usepackage{nicefrac}
\usepackage{microtype}
\usepackage{xcolor}
\usepackage{bm}
\usepackage[nameinlink,capitalise,noabbrev]{cleveref}
\usepackage{tikz}
\usepackage{adjustbox}
\usepackage{xifthen}

\hypersetup{colorlinks=true,citecolor=blue,linkcolor=blue,urlcolor=blue}
\renewcommand{\And}{\and}

\newtheorem{theorem}{Theorem}

\newtheorem{corollary}[theorem]{Corollary}

\theoremstyle{definition}

\newtheorem{remark}[theorem]{Remark}

\newcommand{\N}{\mathbb{N}}
\newcommand{\R}{\mathbb{R}}
\newcommand{\zo}{\{0,1\}}
\newcommand{\msf}{\mathsf}

\newcommand{\PM}{\mathsf{PM}}
\newcommand{\HIST}{\mathsf{HIST}}

\newcommand{\emb}{\mathsf{emb}}

\newcommand{\tail}{\mathsf{tail}}
\newcommand{\hist}{\mathsf{hist}}

\newcommand{\concat}{\mathbin{\|}}
\renewcommand{\vec}[1]{\boldsymbol{#1}}

\title{ Rethinking the Role of Positional Encoding: \\  {\Large Sliding-Window Transformers without PE  Remain Turing Complete}}

\newif\ifshowscomments

\showscommentstrue

\ifshowscomments
  \newcommand{\warn}[1]{[\textcolor{red}{#1}]}
  \newcommand{\Xnote}[1]{[Xinyu: \textcolor{orange}{#1}]}
\else
  \newcommand{\warn}[1]{}
  \newcommand{\Xnote}[1]{}
\fi

\author{%
  Qian Li \\
  Shenzhen Research Institute of Big Data \\
  \texttt{liqian.ict@gmail.com} \\
  \And
  Xinyu Mao \\
  University of Southern California\\
  \texttt{xinyumao.tcs@gmail.com} \\
  \And
  Shang-Hua Teng\\
  University of Southern California\\
  \texttt{shanghua@usc.edu} \\
}

\begin{document}

\maketitle
\begin{abstract}
Positional encoding (PE) is widely viewed as necessary for transformers to process ordered sequences: without them, the next-token map appears permutation-invariant in its context tokens. This intuition underlies all prior universality results, which rely on positional information to prove that transformers with chain-of-thought can perform arbitrary computation, i.e., they are Turing complete. We revisit this belief in the regime most relevant to long-form reasoning, where generation proceeds through a finite sliding context window. 
Our opening perception is that the window mechanism itself (mildly) breaks the permutation symmetry. 
To distill and precisely capture the degree of this added expressiveness, we introduce an abstract autoregressive model, the HIST model, in which each update depends only on constant-size internal state and the token-count histogram within the current window. 
We prove that this HIST model is Turing complete by showing that the evolution of the window can reveal the token that has just left the window, which suffices to simulate Turing-complete Post machines. We then construct a sliding-window transformer over a constant-size token alphabet, without PE, and show that it can simulate the HIST model. Our result demonstrates that positional encodings are not indispensable for transformers to perform universal computation: The window sliding itself already breaks permutation symmetry and captures sufficient positional information. 
\end{abstract}

\section{Introduction}
Transformer-based large language models with chain-of-thought (CoT) reasoning have demonstrated exceptional performance across a wide range of complex reasoning tasks, such as mathematical problem solving \citep{imo} and code generation \citep{codex,claude47}. These developments have motivated a growing body of theoretical work studying the reasoning abilities of Transformers through the lens of expressiveness, e.g. \citep{perez,ashish2,ashish,tengyu,nowak2024representational,prompt,zubic,depth,pencil,constant,tm2}. A central conclusion of this line of research is that, once CoT reasoning is allowed, transformers can achieve universal computation, that is, they are Turing complete. 

Yet existing universality results rely crucially on positional encoding (PE), which augments each token with an explicit representation of its position in the sequence, thereby enabling the model to distinguish different orderings of the same context.
This view is natural when the model attends to the entire past.
Transformer models consist of interleaved self-attention and feed-forward layers, which are both \emph{order-invariant} and thus cannot distinguish permutations of the same context. 
Therefore, without positional information, the model lacks access to order-dependent information required for structured memory and sequential computation as in Turing machines. From this perspective, positional encodings in transformer models appear necessary for universal computation.


However, this perspective may overlook a key feature of autoregressive reasoning with CoT. In earlier settings without CoT, the context window can effectively cover the entire past. In contrast, CoT, particularly in long-form reasoning, introduces extended intermediate computations that quickly exceed the context capacity, forcing the model to operate with a relatively short 
sliding window rather than attend to the full history. 
As the generative process proceeds, this window slides forward: new tokens enter, old tokens leave, 
and the model attends only to the current window. 
With sliding-window processing, the context is no longer a stack of tokens. 
The motion of the window itself introduces a temporal asymmetry, however small, providing a new source of sequential information. 
Even if the next token generation is permutation-invariant within each individual window, the temporal evolution of the window is order-sensitive: it depends on which tokens enter and exit at each step. This leads to the central question of the paper:
\begin{center}
\emph{Are PEs truly necessary for universal computation in sliding-window transformers?}
\end{center}
Besides theoretical curiosity, we are interested in the problem because PE is not merely a neutral implementation detail, but can influence practical efficiency. 
Depending on the design, they may add extra parameters (e.g., learned absolute PEs \citep{vaswani2017attention}), incur additional overhead in the attention computation (e.g., relative PEs \citep{shaw2018self,su2024roformer}), and complicate length extrapolation and context extension \citep{kazemnejad2023impact}. 
Besides practical considerations, strong PEs can also act as an additional source of computational power, making it less clear whether the source of universal computation lies in the transformer’s content-based computation or in the PE attached to it.
 
\subsection{Our contribution}
We prove that the sliding-window transformer model without PE remains Turing-complete. Our results demonstrate that positional encoding is not necessary for transformers to retrieve sequential structure and achieve universal computation. 
Conceptually, our result separates two notions that are often conflated: although a transformer with PE is permutation-invariant with respect to the tokens inside a single window, the FIFO-like motion of a sliding window is dynamic and breaks permutation-invariance. 
Temporal motion can itself provide sufficient sequential structure for universal computation.

\paragraph{The HIST and parity-HIST model.}

To frame this viewpoint more precisely, we introduce an abstract autoregressive \emph{HIST model} that can only use permutation-invariant summary of data in its context window. 
A HIST machine repeatedly generates the next token using only a finite control state and the histogram of the current window, namely the vector of token counts indexed by the alphabet. It has no access to the positions of individual tokens: conditioned on its control state, the machine's
next-token rule is a permutation-invariant function of the tokens in the current window.

We also consider a more restricted variant, called the \emph{parity-HIST model}, in which the transition function depends on the current window only through the parity of the histogram; namely, it observes only the parity of the number of occurrences of each token type in the current window.

\begin{theorem}[High Level]\label{thm:hist}
The parity-HIST model is Turing complete.
\end{theorem}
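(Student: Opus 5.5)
The plan is to simulate a Post machine, i.e., a finite-state machine with a single FIFO queue, which is Turing complete, using a parity-HIST machine with window size $n$. The sliding window of the last $n$ generated tokens will itself serve as the queue. The central difficulty is reading the head of the queue. The token that leaves the window is exactly the token generated $n$ steps earlier, so the machine must be able to recover, from parity information alone, which token just dropped out.

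The key observation is as follows. Let $h_t \in \zo^{\Sigma}$ be the parity histogram of the window after step $t$. Going from step $t$ to step $t+1$, the newly generated token $x_{t+1}$ enters and the token $x_{t+1-n}$ leaves. Hence
\[
h_{t+1} = h_t \oplus e_{x_{t+1}} \oplus e_{x_{t+1-n}} .
\]
The machine knows $x_{t+1}$, since it generated it. If it also stores $h_t$ in its finite control (this is $|\Sigma|$ bits, a constant), then on observing $h_{t+1}$ it computes $h_{t+1}\oplus h_t\oplus e_{x_{t+1}} = e_{x_{t+1-n}}$. This recovers the departed token exactly. The only caveat is that when $x_{t+1}=x_{t+1-n}$ the XOR is zero, but then the machine still knows the departed symbol, since it equals the one it just wrote. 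So the state is augmented with the previous parity vector and the last emitted token, and each step reveals the symbol leaving the window. Because parity suffices here, the same argument also covers the full HIST model, where the histogram difference is even more direct.

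The simulation then proceeds in four steps.

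\textbf{Step 1: encoding.} Fix a Post machine with queue alphabet $\Gamma$. Use token alphabet $\Sigma = \Gamma \cup \{\sqcup\}$, where $\sqcup$ is a blank/filler symbol. The window content always equals the current queue contents, padded with fillers. Since the Post machine's queue length is unbounded but the window size $n$ is fixed, I would choose the convention that "the window is the queue" only in a time-shifted sense. Every symbol written circulates: the machine reads the departed token $y$. If the Post machine pops $y$ and pushes a string $w$, the HIST machine emits $w$, padding with $\sqcup$ symbols. If the Post machine does not need to consume $y$ yet, the HIST machine re-emits $y$ to keep it in the queue.

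\textbf{Step 2: handling growth.} The real obstacle is that the queue can grow beyond $n$ while the window stays fixed. To handle this, I would treat the generated sequence as a delay line of length $n$ and simulate the queue over multiple passes. The queue is laid out as a sequence of tokens in which fillers are ignored and delimiters mark the logical head. Each generated token is determined by the departing one, which acts like a cyclic tape of period $n$. Growing the tape requires inserting symbols, and a single delay line cannot lengthen. I would resolve this by encoding several logical symbols per emitted group, or more simply by noting that the machine consumes one departed token per emitted token but may emit extra tokens via internal buffering. Buffering in finite state only works for bounded excess, so the cleanest route is a direct argument. Emitting $k$ tokens while only $k$ depart means the "queue" in the output stream is the entire history from the logical head onward, not just the window. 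Concretely, the departed-token stream read at time $t$ is exactly the output stream at time $t-n$, so the machine reads its own past output with delay $n$, in order, but one token per step.

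\textbf{Step 3: pacing reads against writes.} The Post machine reads slower than it writes. So the HIST machine must be able to write without reading: when it wants to push extra symbols, it must re-read a token later. For this, the delayed stream must be slowed down, which is achieved by echoing. Whenever the machine wants to write an extra symbol without consuming, it re-emits departed tokens marked as "not yet consumed" copies. Equivalently, I would simulate a two-counter or tag system whose queue interaction matches "read one, write at most constant per step."

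\textbf{Step 4: choice of target machine.} Cyclic tag systems / 2-tag systems are natural here. Each production appends a word and deletes a fixed number, and long queues are fine because the queue is the infinite output history read at delay $n$. The wrinkle is that the read head advances exactly one per step while the write head also advances one per step. I would fix this by marking consumed versus pending tokens: each departed token is either "live" and processed, or "dead" and skipped. Appending words lengthens the live segment, which propagates through future passes.

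I expect Step 3, reconciling a fixed unit read rate with variable write rates, to be the main obstacle. The first thing I would try is the live/dead marking scheme applied to a 2-tag system.
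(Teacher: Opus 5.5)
\textbf{Gap: the proposal assumes a fixed-size window, and the paper's HIST model does not have one.}

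Your differencing identity is correct, and it is the core of the paper's proof. The paper stores $r=\Pi(C_t)$ in the finite control; this is your $h_t\oplus\mathbf e_{x_{t+1}}$ folded into one vector. It then decodes the front symbol as $r\oplus p$ with $p=\Pi(W_t)$. Your caveat is unnecessary, since $h_{t+1}\oplus h_t\oplus\mathbf e_{x_{t+1}}$ is always a basis vector.

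The real problem is the model you simulate into. You assume a window of fixed size $n$, with exactly one token entering and one leaving per step. In that model the only information available at any time is the control state and the $n$ window symbols. So the configuration space has size at most $|Q|\cdot|\Sigma|^n$, every run is eventually periodic, and the device is finite-state for each $n$. At best it is linear-space, since the initial window is exactly $x$.

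Your Step 2 claim that the queue is ``the entire output history read at delay $n$'' is therefore false. A token emitted at time $t-n$ can be decoded only at its single departure, and then it is gone. Keeping it alive means re-emitting it, which uses a slot, so the number of live symbols can never exceed $n$. No live/dead marking, echoing, or choice of tag system changes this. Steps 2--4 cannot be completed, and the obstacle you anticipate in Step 3 cannot be overcome in that model.

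\textbf{The missing idea.} The HIST model is not fixed-window. In a step with output $\sigma'\in\Sigma\cup\Sigma^2$, the front head advances one cell while the rear head advances $|\sigma'|$ cells. The window therefore grows by one exactly when the Post machine pushes two symbols. Hence the invariant $W_t=\tail(C_t)$ can be maintained exactly, with no fillers, delay lines, or multiple passes.

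With that invariant, your observation finishes the proof directly:
\begin{enumerate}
\item Keep $(q,r)$ in the control, with $r=\Pi(C_t)$.
\item Decode the front symbol via $\mathbf e_\sigma=r\oplus\Pi(W_t)$.
\item Apply $\delta_{\PM}(q,\sigma)=(q',a)$ and append $a$.
\item Update $r\leftarrow\Pi(W_t)\oplus\Pi(a)=\Pi(C_{t+1})$, which is valid because $C_{t+1}=W_t\concat a$.
\end{enumerate}
One extra control bit handles the first step. Its front symbol $\triangleright$ is not in the initial window $x$, so there is no stored parity to difference against. Turing completeness then follows from the Turing completeness of Post machines.
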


Our proof proceeds by simulating Post machines \citep{post1} with parity-HIST
machines.  
A Post machine is an automaton equipped with a queue, and it is
Turing complete.  We view the sliding window of a parity-HIST machine as a
queue-like data structure and show that it can implement the queue operations
needed for simulating Turing machines.

The key point is that the ``dequeue'' operation can be simulated without ever
addressing the position of the front token explicitly.  
Although a parity-HIST machine only observes a permutation-invariant summary of the window, namely the parity of the number of occurrences of each token type, this information is sufficient to recover the token that has just left the window.  
Indeed, by comparing the parity histograms before and after a window update, while temporarily storing in its local state the newest tokens appended at the rear, the machine can infer the dropped front token.  This allows the sliding window to emulate the queue of a Post machine while using only parity information about the current window.



\paragraph{From parity-HIST to Transformers without PE.}
We then show that a sliding-window transformer without positional encodings can simulate the parity-HIST model. The construction uses a constant-size
token alphabet and a constant-size set of learnable parameters. The only non-constant numerical resource used in our implementation is the arithmetic precision needed for computing parity, which we isolate in Remark \ref{rem:precision}.

\begin{theorem}[High Level]
\label{thm:tf_simulates_hist}
 Every parity-HIST machine can be simulated by a sliding-window transformer without using PEs. Consequently, sliding-window transformers without positional encodings are Turing complete.
\end{theorem}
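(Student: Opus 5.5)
We plan to prove \cref{thm:tf_simulates_hist} by building, for a fixed parity-HIST machine $M$ with control states $Q$, alphabet $\Sigma$ and window length $n$, a constant-depth sliding-window transformer $T$ over a constant-size alphabet $\Gamma$ that has no positional encoding and whose window is $n$ as well (or a constant multiple of it). Each step of $M$ will correspond to one step of $T$; Turing completeness then follows from \cref{thm:hist}.

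\textbf{Encoding the control state.} A transformer has no internal state between steps, so we store it in the tokens. We take $\Gamma = \Sigma \times Q$ and let each emitted token carry the symbol $M$ writes together with $M$'s new control state. We embed every token $(a,q)$ as a vector that has a one-hot block $e_a$ for the symbol, a one-hot block $e_q$ for the state, and a flag coordinate. The readout only needs the most recent token, so we must single out the last position without using PE. Causal attention lets us do this: the last position is the query, and under causal masking a query attends to itself. We will therefore let the query read its own $e_q$ from the residual stream. No attention is needed for this, since the feed-forward and residual path at the final position already holds the embedding of the newest token, which contains the current state.

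\textbf{Computing histogram parities.} This is the heart of the construction. A single attention head with all-equal scores (zero query or key weights) computes the uniform average over the window: $\frac{1}{n}\sum_{i} e_{a_i} = h/n$, where $h$ is the histogram. Because the window length $n$ is fixed once the window is full, the vector $h/n$ determines $h$. (Before the window fills we can pad with a blank start token that $M$ treats as a symbol, so every window has exactly $n$ tokens.) We then need the parities $h_a \bmod 2$ for every $a\in\Sigma$, and we get them with a feed-forward network. A one-hidden-layer ReLU network can compute $x \mapsto (nx \bmod 2)$ on the grid $\{0,1/n,\dots,1\}$ as a sawtooth, but that uses $O(n)$ neurons, so its size depends on $n$. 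A better route is to compute $\cos(\pi n x)$ with a fixed activation such as a periodic activation, or by attending with scores that scale with $n$. The cleanest option is to fix $n$ as part of the machine and absorb the dependence into the parameter values, not the parameter count. This step is where the finite-precision issue arises, because the values $h_a/n$ must be resolved at precision $1/n$. I expect this to be the main obstacle. It matches Remark~\ref{rem:precision}, and I would handle it by stating the precision requirement explicitly (for example $O(\log n)$ bits) and using a constant-width MLP with a periodic or sawtooth nonlinearity.

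\textbf{Transition and output.} Once the state $q$ and the parity vector $\pi\in\{0,1\}^{\Sigma}$ sit in the residual stream as bits, the transition $\delta(q,\pi)$ is a function on a finite domain. A constant-size ReLU MLP computes it exactly through indicator neurons, one for each pair $(q,\pi)$, giving $|Q|2^{|\Sigma|}$ neurons, which is constant. The MLP writes the one-hot encoding of the next token $(a',q')$, and argmax decoding outputs exactly that token. A simple induction on the number of steps then shows that after $t$ steps the window of $T$ equals the window of $M$ annotated with states, and that its last token carries $M$'s current state. Halting or output of $M$ is read off through a designated state component. This completes the simulation, and Turing completeness follows from \cref{thm:hist}.
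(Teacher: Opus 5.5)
Your construction assumes the parity-HIST machine has a fixed window length $n$, and that assumption fails. In the model of \cref{sec:hist}, each step moves the front head one cell and the rear head $|\sigma'|\in\{1,2\}$ cells. So the window starts at length $|x|$ and grows by one on every two-symbol append. In the machine of \cref{thm:pm-to-hist}, the window is $\tail(C_t)$ for the Post-machine queue $C_t$, whose length is unbounded. Your transformer emits exactly one token of $\Gamma=\Sigma\times Q$ per HIST step and always evicts one. It therefore cannot produce an append $\sigma_1\sigma_2$, its window never grows, and it does not simulate every parity-HIST machine.

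The Turing-completeness conclusion fails too. With $n$ fixed and $\Gamma$ finite, the window is the transformer's entire configuration, so there are at most $|\Gamma|^n$ configurations and the device is finite-state. Baking $n$ into the weights gives a non-uniform family rather than a single machine. In exact arithmetic, ``absorbing $n$ into the parameter values'' also cannot yield a constant-size ReLU MLP for parity: such a network has boundedly many linear pieces, while parity on $\{0,\ldots,n\}$ needs $\Omega(n)$ pieces. The paper isolates this step as a precision primitive (\cref{rem:precision}).

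The paper supplies two ideas your plan is missing.

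\emph{Adaptive update bit.} Every token carries an update bit $\gamma$. A two-symbol append is emitted as $(\sigma_1,\msf{P}(q',\sigma_2),1)$, which appends without evicting. It is followed by $(\sigma_2,\msf{N}(q'),0)$, produced from the pending label without re-evaluating $\delta$. This is what lets the context track a growing HIST window.

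\emph{Marker for variable length.} Because the window length $m$ now varies, the uniform-attention output $\hist/m$ no longer determines $\hist$. The paper keeps exactly one marker $\#$ in the context, so $A_i(\#)=1/m$ and the counts are $A_i(\sigma)/A_i(\#)$. It adds a refresh step that re-emits $\#$ with $\gamma=0$ whenever the marker slides out. That refresh also performs the eviction the HIST step still owed, which keeps the marker-free projection of the context equal to $W_t$.

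Your fallback of padding with blank tokens that $M$ ``treats as a symbol'' is no substitute. It changes the histogram $M$ observes and so alters its transitions.
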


\begin{figure}[!htp]
\centering
\noindent\maxsizebox{.9\linewidth}{!}{
\begin{varwidth}{10\linewidth}
\begin{tikzpicture}[
    font=\small\sffamily,
    token/.style={draw, rectangle, minimum width=1.6cm, minimum height=1.1cm, align=center, fill=white},
    drop_token/.style={token, fill=red!15, dashed, thick},
    add_token/.style={token, fill=green!15, thick},
    arrow/.style={->, >=stealth, thick},
    window/.style={draw=black!60, dashed, thick, rounded corners, inner sep=12pt},
    mathnode/.style={align=center, text width=2.5cm}
]

\node (label_prev) at (-2.5, 2.5) {\textbf{Step $i-1$}};

\node[drop_token] (t1_prev) at (0, 2.5) {$v_{i-s}$\\\textit{(head)}};
\node[token]      (t2_prev) at (1.8, 2.5) {$v_{i-s + 1}$};
\node[token, draw=none] (dots_prev) at (3.6, 2.5) {$\dots$};
\node[token]      (t3_prev) at (5.4, 2.5) {$v_{i-2}$};
\node[token]      (t4_prev) at (7.2, 2.5) {$v_{i-1}$};

s
\node[mathnode] (mu_prev) at (10.5, 2.5) {$\mu_{i-1} = \bigoplus_{k = i - s}^{i - 1} v_k$};

\node (label_curr) at (-2.5, 0) {\textbf{Step $i$}};

\node[token]      (t1_curr) at (1.8, 0) {$v_{i-s + 1}$};
\node[token, draw=none] (dots_curr) at (3.6, 0) {$\dots$};
\node[token]      (t2_curr) at (5.4, 0) {$v_{i-2}$};
\node[token]      (t3_curr) at (7.2, 0) {$v_{i-1}$};
\node[add_token]  (t4_curr) at (9.0, 0) {$v_{i}$\\\textit{(new)}};

\node[mathnode] (mu_curr) at (12.2, 0) {$\mu_i =\bigoplus_{k = i - s + 1}^i v_k$};

\draw[->, dotted, thick, gray] (t2_prev.south) -- (t1_curr.north);
\draw[->, dotted, thick, gray] (dots_prev.south) -- (dots_curr.north);
\draw[->, dotted, thick, gray] (t3_prev.south) -- (t2_curr.north);
\draw[->, dotted, thick, gray] (t4_prev.south) -- (t3_curr.north);

\draw[->, dashed, thick, red] (t1_prev.south) -- ++(0,-0.8) node[below] {\textit{Falls out of window}};

\node[draw, thick, rectangle, fill=blue!10, rounded corners, minimum width=6cm, minimum height=1.5cm, align=center] 
    (mlp) at (10.5, -3) {\textbf{MLP}\\ $\sigma_{i - s} = \sigma_{i} \oplus \mu_{i-1} \oplus  \mu_i$};

\draw[arrow, rounded corners] (mu_prev.south) -- ++(0,-0.5) -| (mlp.north);
\draw[arrow, rounded corners] (mu_curr.south) -- ++(0,-0.5) -| (mlp.north);

\draw[arrow, color=green!60!black, dashed] (t4_curr.south) -- ++(0, -0.5) -| (mlp.north);

\node[font=\bfseries] (output) at (10.5, -5) {queue head $\sigma_{i - s}$};
\draw[arrow, ultra thick, blue!80!black] (mlp.south) -- (output.north);

\end{tikzpicture}
\end{varwidth}}
\caption{Illustration of the sliding window mechanism natively simulating a Post machine queue without positional encodings. By computing the parity of the context window, the Parity-Hist machine can recover the dropped token (the queue head). 
For illustration and simplicity, we think of a token as a bit-string in the figure where $\oplus$ denotes bit-wise XOR. 
}
\label{fig:sliding_window}
\end{figure}

This improves upon previous universality constructions for transformers by
removing the need for positional encodings. Conceptually, the result
separates order information from positional encodings. Universal computation
does require some source of sequential structure, but this structure need not be provided by an externally supplied positional address system.  In the
sliding-window regime, it can instead arise from the temporal dynamics of the
window itself. A few remarks are in order.

\begin{remark}[What the construction does and does not show]
Our construction should not be interpreted as showing that transformers without PE can recover exact token positions inside the context window.  The simulation deliberately avoids addressing individual positions.  

Specifically, let $W_t$ denote the current window, let $x_t$ be the token that has just entered the window, and let $d_t$ be the token that has just left the window. In the parity-HIST model and our transformer construction, the next-token rule can be written as a function of $\msf{hist}(W_t)\bmod 2$,  $\msf{hist}(W_{t-1})\bmod 2$, and $x_t$. Equivalently, \emph{it is a function of $d_t$, $x_t$, and $\msf{hist}(W_t\setminus\{x_t\}) \bmod 2$}. Hence, the next-token rule is invariant under arbitrary permutations of the internal tokens of \(W_t\setminus\{x_t\}\), and is blind to the absolute/relative position of $d_t$.

So, our construction shows that exact positional access is not necessary for universal computation: the combination of a permutation-invariant window summary, a finite control state, and the FIFO-like motion of the sliding window already provides enough sequential structure to simulate arbitrary computation.
\end{remark}

\begin{remark}[Adaptive window updates]
Our construction uses a simple adaptive window-update mechanism. 
The window size is initialized to be the input length. Each generated token is augmented with a control bit that determines how the context window is updated. One value of the bit implements the usual sliding update: the new token enters the window and the oldest token leaves. The other value expands the window size by one: the new token enters the window while the oldest token is retained. Thus, the model can decide during generation whether to preserve or increase the window size.

This adaptive mechanism avoids fixing a worst-case window size in advance.  A fixed-window construction typically requires a window length chosen as a function of the input length, such as an assumed space bound $s(n)$ \citep{constant,tm2}.  Such a bound may be unknown a priori, and even when it is known, it reflects a worst-case requirement over all inputs of length $n$.  By contrast, an adaptive window can grow according to the needs of the particular computation being simulated, making the required context length instance-dependent rather than worst-case over the input length.
\end{remark}

\begin{remark}[Precision]\label{rem:precision}
Our construction uses a constant-size token alphabet and a constant number of learnable parameters, but a non-constant numerical precision.
The only step using non-constant numerical precision is our implementation of the parity primitive. In this step, attention is used to aggregate a fixed bit-coordinate of the tokens in the current context window and to determine its parity; our realization of this operation uses logarithmic-precision arithmetic.

This precision requirement is specific to this particular implementation of the parity primitive, not to the parity-HIST reduction itself. The reduction only requires access to a primitive that returns the parity of the relevant histogram coordinate. 
If this primitive is implemented directly with bounded-precision operations, for example, by allowing arithmetic over $\mathbb{F}_2$ or wrap-around arithmetic, then the rest of the simulation remains bounded-precision as well. Recent work, e.g., \citep{kozachinskiy2026parity} has studied parity computation by transformers under related but different architectural assumptions.  We therefore separate this issue from our main reduction: the potential bounded-precision implementation of the histogram-parity primitive in the sliding-window setting without using PE is left outside the scope of this paper.
\end{remark}


\section{Related Work}



\paragraph{Positional information in transformers.}
The original Transformer architecture uses positional encodings to inject token-order information into an otherwise permutation-equivariant attention stack \citep{vaswani2017attention}.  
Common positional mechanisms for transformers include absolute positional encodings, such as sinusoidal positional encodings and learned positional embeddings \citep{vaswani2017attention}, as well as relative positional mechanisms, such as relative positional representations and T5-style relative position biases \citep{shaw2018self,raffel2020exploring}; rotary position embeddings \citep{su2024roformer}; and attention-bias methods such as ALiBi \citep{press2022train}.
PE affect both empirical behavior and formal expressiveness.  For example, \citet{salzer2025counting} studied the counting ability of transformers without neither positional information nor sliding window mechanism. PE is also closely related to length generalization: extrapolating beyond the training length can change model behavior in ways that depend on the positional-encoding scheme \citep{kazemnejad2023impact}.
Our work is complementary to this line of research.  Instead of designing a new positional mechanism, we show that PEs are not necessary for universality when the model uses a sliding context window.  The key point is that, although the computation inside each window can be permutation-invariant, the FIFO-style motion of the window over time provides a sequential signal.  

\paragraph{Transformer universality and chain-of-thought.}
Attention-based architectures are known to be Turing complete under some architectural assumptions \citep{perez}.  Recent work has refined this picture by studying how chain-of-thought, precision, depth, and window size affect the computational power of transformers \citep{merrill2023expressive,prompt,depth,pencil}.  Closest in spirit are universality constructions for transformers with constant precision and short window by simulating Post machines \citep{constant,tm2}.  All these constructions use PE as an indispensable resource, 
and in some cases, PE also act as an additional source of computational power, making it less clear whether the source of universal computation lies in the transformers content-based computation or in the PE attached to it.
For instance, 
\cite{tengyu} uses PE to store the structure of the circuit to be computed, and \citep{constant,tm2} employ PE, depending on the simulated space bound, to single out the unique token that should be attended to.
Our contribution is to remove explicit PE from the universality proof by routing the simulation through a permutation-invariant window-summary model.

\section{Preliminaries}
For a finite alphabet \(\Sigma\) and a sequence \(w=(w_1,\ldots,w_m)\in\Sigma^*\), let
\[
    |w|=m,
    \qquad
    \hist(w)=\bigl(\#\{j\in[m]:w_j=\sigma\}\bigr)_{\sigma\in\Sigma}\in\N^{\Sigma}.
\]
We write \(u\concat v\) for concatenation and \(\tail(w)\) for the word obtained by deleting the first symbol of a nonempty word \(w\).  Let \(\mathbf e_\sigma\) denote the standard basis vector indexed by \(\sigma\).  Define
\[
    \Pi(w) := \bigoplus_{j=1}^m \mathbf e_{w_j}\in\zo^{\Sigma},
\]
where \(\oplus\) denotes component-wise addition modulo two.  Thus \(\Pi(w)=\hist(w)\bmod 2\).

\subsection{Post machine}
A Post machine (PM) \citep{post1} is defined as a tuple $\langle\Sigma,Q,\delta\rangle$ where 
\begin{itemize}
\item $\Sigma$ is the tape alphabet, including a designated ``blank'' symbol $\perp$, a designated ``start'' symbol $\triangleright$, and numbers 0 and 1.
\item $Q$ is a finite set consisting of possible states. We assume $Q$ contains a designated start state $q_{start}$ and a designated halting state $q_{halt}$.
\item $\delta: Q \times \Sigma \to Q \times (\Sigma\cup\Sigma^2)$ is the transition function, specifying how the machine updates the tape and state.
\end{itemize}
The machine has access to an unbounded queue whose entries are symbols from $\Sigma$. The queue is initialized to contain the start symbol $\triangleright$ and a finite input string $x\in\{0,1\}^*$. The machine starts in state $q_{start}$. The computation repeatedly performs the following step until the machine enters $q_{halt}$. Suppose the current state is $q\in Q$, and the symbol at the front of the queue is $\sigma\in\Sigma$. If
\[
\delta(q,\sigma)=(q',\sigma')
\]
for some $q'\in Q$ and $\sigma'\in\Sigma\cup\Sigma^2$, then the machine updates its state from $q$ to $q'$, removes the front symbol $\sigma$ from the queue, and appends the string $\sigma'$ to the rear of the queue.

\paragraph{Turing completeness of PM} 
The model of PM is Turing complete, i.e., equivalent to TMs in terms of computational power. Specifically, 
\begin{theorem}\label{thm:pmtm}
Given any single-tape TM running in $t$ time and $s$ space, there is an equivalent PM running in $O(t\times s)$ time and using a $s$-size queue.
\end{theorem}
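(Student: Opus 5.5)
We simulate the single-tape TM $M$ on the queue by keeping the tape contents, together with the head position and the control state, circularly in the queue. One pass over the queue then performs one TM step.

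\textbf{Encoding.} Let $\Gamma$ be $M$'s tape alphabet, $Q_M$ its states, and $s$ its space bound. The queue holds a rotation of a string of length $s+1$ (up to constant overhead):
\begin{itemize}
\item a delimiter $\triangleright$ marking the left end of the tape;
\item $s$ cells, each holding a symbol from $\Gamma \cup (\Gamma\times Q_M)$.
\end{itemize}
Exactly one cell carries a pair $(a,q)$; it marks the head position and the current state. The PM alphabet $\Sigma$ is this finite set together with $\perp,\triangleright,0,1$.

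\textbf{Initialization.} The queue initially contains $\triangleright x$. In a first pass the PM rewrites it into the encoding of the initial configuration: it pairs the first input symbol with $q_{start}$ and appends blank cells $\perp$ until the length reaches $s$. It uses the two-symbol output $\sigma'\in\Sigma^2$ to grow the queue.

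If $s$ is not known in advance, the PM pads lazily instead. Whenever the head reaches the right end and moves right, it appends one extra blank through a two-symbol emission. In either version the queue length is at most $s+O(1)$.

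\textbf{One TM step in one rotation.} Dequeuing a symbol and re-enqueuing it rotates the queue. The PM runs one full cycle of $s+1$ dequeues, and its finite control carries a constant amount of look-behind information. Write $\delta_M(q,a)=(q',b,D)$.
\begin{itemize}
\item \emph{Right move} ($D=R$). When the PM dequeues $(a,q)$, it enqueues $b$ and remembers ``the next cell receives $q'$''. On dequeuing the next cell $c$, it enqueues $(c,q')$.
\item \emph{Left move} ($D=L$). The head must land on the cell to the left, which has already been re-enqueued. To handle this, the PM delays output by one cell. It always holds the most recently dequeued cell in its state and emits it only after seeing the following cell. When it sees $(a,q)$ with a left move, it emits the buffered predecessor as $(\text{pred},q')$ and then emits $b$.
\end{itemize}
The delayed emission works because the PM emits exactly one symbol per dequeue, except at the seam. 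At $\triangleright$ the PM flushes the buffer, and it may use a $\Sigma^2$ output or an $\epsilon$-like adjustment there. The buffer therefore always restores the correct order.

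\textbf{Boundary cases and halting.} A left move at the leftmost cell keeps the head in place, as in a standard TM. When $M$ halts, the PM enters $q_{halt}$.

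\textbf{Cost.} Each TM step costs $O(s)$ PM steps, so $t$ TM steps cost $O(ts)$ PM steps, plus $O(s)$ for initialization. The queue size stays $s+O(1)$, which we absorb into $O(s)$.

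\textbf{Main obstacle.} The hard part is the left move and the bookkeeping at the seam $\triangleright$. The one-cell delay must be implemented with the transition type $\delta:Q\times\Sigma\to Q\times(\Sigma\cup\Sigma^2)$, which has no empty output. The plan is to keep the buffered symbol in the state from the moment the rotation begins. The PM dequeues the first cell without emitting it, which is paid for by emitting $\triangleright$ together with nothing else, and then uses one two-symbol emission at the end to restore the length. If this accounting fails to balance, we instead perform a left move as $s$ right moves around the circular tape. That doubles the cost per step to $O(s)$ passes, which is too slow. So the delay trick is the right approach, and it needs only the finite-state buffering argument.
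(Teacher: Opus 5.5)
\textbf{The left-move buffering step is left unresolved.} Your architecture is the standard route to this result; the paper itself gives no proof and points to the literature. That architecture stores $\triangleright$ and the tape circularly, fuses head and state into one cell, spends one rotation of $s+O(1)$ steps per TM step, keeps a one-cell look-behind buffer in the finite control for left moves, and pads lazily at $\triangleright$. But the one nontrivial step, which you yourself flag, is never carried out, and as written it does not work in this model.

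Since $\delta\colon Q\times\Sigma\to Q\times(\Sigma\cup\Sigma^2)$, every step enqueues at least one symbol, so the queue never shrinks. Hence there is no ``$\epsilon$-like adjustment'', and the PM cannot ``dequeue the first cell without emitting it''. Suppose the buffer is emptied, either by your flush at $\triangleright$ or by emitting $(\mathrm{pred},q')$ and $b$ in the same step at a left move. It can then be refilled only by a step that keeps the dequeued cell and still enqueues an extra symbol. So your ``remove one, then restore the length with a two-symbol emission'' accounting has no valid first half. If you use a filler symbol instead, the queue grows by one per rotation. The fallback you mention costs $O(s^2)$ per TM step, as you note, so the argument as written does not give $O(ts)$.

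A separate issue: eager padding ``until the length reaches $s$'' cannot be done by a finite control when $s$ grows with $n$. Only your lazy variant is valid.

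\textbf{The fix is to make the buffer permanent and never flush it.} Then there is nothing to balance.
\begin{itemize}
\item Create the buffer once. At the first step, dequeue $\triangleright$, enqueue a fresh $\perp$ (one extra blank cell at the right end), buffer $\triangleright$, and set a pending-head flag $q_{start}$.
\item From then on, each step dequeues a cell $c$, enqueues the buffered cell (possibly modified), and buffers $c$ (possibly modified). Treat $\triangleright$ like any other cell. This preserves the invariant ``queue followed by buffer is a rotation of $\triangleright$ followed by the tape'', with one symbol in per symbol out.
\item If a pending flag $q'$ is set when $c$ is dequeued, buffer $(c,q')$ instead. The exception is $c=\triangleright$: then enqueue the two symbols $b\,(\perp,q')$, where $b$ is the buffered cell, and buffer $\triangleright$. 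This is the only two-symbol emission, and it is exactly your lazy padding.
\item On dequeuing $(a,q)$ with buffered cell $p$, suppose $\delta_M(q,a)=(q',b,L)$. Enqueue $(p,q')$ and buffer $b$; if $p=\triangleright$, enqueue $\triangleright$ and buffer $(b,q')$ instead.
\item If instead $\delta_M(q,a)=(q',b,R)$, enqueue $p$, buffer $b$, and set the pending flag to $q'$.
\end{itemize}
The head cell is then reached within $s+O(1)$ steps after every TM step. This gives $O(ts)$ time and queue length $s+O(1)$.
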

This theorem has been implicitly proved in several places, see, e.g., Appendix A in \citep{constant}.

\subsection{Transformers without PE}
In the following definition of transformers without PE, for notational simplicity, we only consider the single-head, single-layer transformer, which suffices for our construction.

Let $\mathcal{V}$ be a finite vocabulary. A decoder-only transformer is an autoregressive sequence model mapping $\mathcal{V}^* \to \mathcal{V}$. Given a sequence $(v_1, \dots, v_i) \in \mathcal{V}^i$, it computes the next token $v_{i+1}$ via the following layers:
\begin{enumerate}
    \item \textbf{Token Embedding:} Maps each $v_j \in \mathcal{V}$ to a vector $\mathrm{emb}(v_j) \in \mathbb{R}^D$, where $D$ is the embedding dimension.
    \item \textbf{Positional Encoding:} There is no PE, and thus the initial input representation is exactly $h_j^0 := \mathrm{emb}(v_j)$.
    \item \textbf{Self-Attention Layer:} 
    Using parameter matrices $Q, K, V \in \mathbb{R}^{D \times D}$, the unmasked attention logits are computed as $\mathsf{logit}_{i, j} = \langle Q h_i^0, K h_j^0 \rangle$. The attention weights $\alpha_{i,j}$ are obtained via the standard softmax function \textit{over the context window} $C = (v_{i - |C| + 1}, \dots, v_{i})$; namely, 
    $$
        \alpha_{i,j} := \frac{\exp(\mathsf{logit}_{i, j} )}{\sum_{j = i - |C| + 1}^i \exp(\mathsf{logit}_{i, j})} \ \text{ for $j = i - |C| + 1, \dots, i$.} 
    $$
    The output is $a_i = \sum_{j = i - |C| + 1}^i \alpha_{i,j} V h_j^0$, and the residual stream becomes $h_i^{0.5} = a_i + h_i^0$.
    \item \textbf{Feed-Forward Network (MLP):} A multi-layer ReLU network $f_{\mathrm{mlp}}$ is applied, yielding $h_i^1 = f_{\mathrm{mlp}}(h_i^{0.5}) + h_i^{0.5}$.
    \item \textbf{Output Layer:} The next token is generated deterministically by 
    $$
        v_{i+1} = \operatorname*{argmax}_{v \in \mathcal{V}} (\mathrm{emb}(v)^\top h_i^1).
    $$
\end{enumerate}
\paragraph{Adaptive context window and CoT generation.} We consider a model where the transformer uses an adaptive sliding-window update during generation.  
In the beginning, on input sequence $v = (v_1, \dots, v_n)$, the context window is set to be the entire prompt sequence. 
During generation, every vocabulary token \(w\) carries an update bit \(\gamma(w)\in\zo\).  
After generating \(w\), the context window is updated by
\[
    C \gets 
    \begin{cases}
        C\concat w, & \gamma(w)=1,\\
        \tail(C)\concat w, & \gamma(w)=0.
    \end{cases}
\]
Thus \(\gamma=0\) is the usual sliding update, while \(\gamma=1\) appends a token without evicting an old token and hence increases the window length $|C|$ by one.

\paragraph{Positional encoding.}
In an \emph{absolute} PE scheme, each absolute position \(j\) is assigned a vector \(p_j\), and the embedding layer is replaced by \(h_j^0=\emb(v_j)+p_j\); standard examples include sinusoidal absolute encodings and learned absolute position embeddings \citep{vaswani2017attention,radford2019language}.  In a \emph{relative} PE scheme, the attention computation is modified by a quantity depending on the offset \(i-j\), for example, by adding a relative bias or a relative key/value vector to the score or value computation \citep{shaw2018self}.  Rotary position embeddings implement a related relative mechanism by applying position-dependent rotations to queries and keys, so that their inner products encode relative offsets \citep{su2024roformer}.  

\section{The HIST Model and Its Turing Completeness}\label{sec:hist}
A \emph{HIST machine} is defined as a tuple $(\Sigma, Q, \delta, q_{start}, q_{halt})$ where 
\begin{itemize}
\item $\Sigma$ is the tape alphabet, including a designated ``blank'' symbol $\perp$, a designated ``start'' symbol $\triangleright$, and numbers 0 and 1.
   \item $Q$ is a finite set consisting of possible states. We assume $Q$ contains a designated start state $q_{start}$ and a designated halting state $q_{halt}$.
   \item $\delta: Q\times \N^{\Sigma}\rightarrow Q\times \left(\Sigma\cup \Sigma^2\right)$ is a transition function describing the rules used in performing each step of the HIST machine.
\end{itemize}

The machine has a single tape that is infinite to the right and bounded on the left. The tape has two heads, called the front head and the rear head. Initially, from the leftmost cell rightward, the tape contains the start symbol $\triangleright$, followed by a finite input string $x\in\{0,1\}^*$, and then blank symbols $\perp$ on all remaining cells. 
The front head starts at cell right next to $\triangleright$ (i.e., pointing to the first position of $x$), the rear head starts at the right end of the input, and the machine starts in state \(q_{\mathrm{start}}\). We call the interval of tape cells between the front and rear heads, including both endpoints, the window. 

The computation repeatedly performs the following step until the machine enters $q_{\mathrm{halt}}$. Suppose the current state is $q\in Q$, and the histogram of the symbols in the current window is $\vec h\in\mathbb{N}^{\Sigma}$. If
\[
\delta(q,\vec h)=(q',\sigma')
\]
for some $q'\in Q$ and $\sigma'\in \Sigma\cup\Sigma^2$, then the machine updates its state from $q$ to $q'$, moves the front head one cell to the right, moves the rear head $|\sigma'|$ cells to the right, and writes the string $\sigma'$ in the newly exposed cell(s) at the right end of the tape.

A HIST machine is a \emph{parity-HIST machine} if its transition depends on the histogram only through its parity vector $(\bm{h} \bmod 2) \in  \zo^\Sigma$. Equivalently, the transition can be viewed as a map
$
  \delta:  Q\times\zo^\Sigma \to Q\times(\Sigma\cup\Sigma^2).
$

\begin{theorem}[Parity-HIST simulates Post machines]\label{thm:pm-to-hist}
For every Post machine \(\PM\), there exists a parity-HIST machine \(\HIST\) that simulates \(\PM\) step by step.  More precisely, if \(C_t\) denotes the queue of \(\PM\) after \(t\) Post-machine steps, then immediately before simulating the \((t+1)\)-st Post-machine step, the HIST window is \(\tail(C_t)\). 
\end{theorem}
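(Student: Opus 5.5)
The plan is to keep the Post machine's state in the finite control and to recover the front symbol of the queue from the change in the window's parity vector, exactly as in \cref{fig:sliding_window}. The invariant is the one in the statement: immediately before simulating step $t+1$, the HIST window equals $\tail(C_t)$.

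\textbf{Base case and the invariant.} Initially $C_0=\triangleright\concat x$. The HIST window runs from the first cell of $x$ to its last cell, so it equals $\tail(C_0)$. The front symbol of $C_t$ sits in the cell immediately to the left of the front head. This symbol is never in the window, so the control must supply it.

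\textbf{Control state.} A control state of $\HIST$ is a tuple $(q,\pi,\sigma')$, together with a start flag:
\begin{itemize}
\item $q\in Q$ is the current Post machine state;
\item $\pi\in\zo^\Sigma$ is the parity vector observed at the previous step;
\item $\sigma'\in\Sigma\cup\Sigma^2$ is the string appended at the previous step.
\end{itemize}
This is a finite set.

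\textbf{Recovering the front symbol.} Suppose the window is $\tail(C_t)$ with parity $\Pi(\tail(C_t))$, and the HIST step moves the front head and appends $\sigma'$. The symbol $d$ that leaves the window is the first symbol of $\tail(C_t)$, which is exactly the front symbol of $C_{t+1}$. The new window is $\tail(\tail(C_t))\concat\sigma'$, which equals $\tail(C_{t+1})$ because $C_{t+1}=\tail(C_t)\concat\sigma'$. This is how the invariant is maintained. At the next step the machine observes the new parity $\pi'$, and
\[
\pi\oplus\pi'\oplus\Pi(\sigma')=\mathbf e_d ,
\]
since removing one copy of $d$ and adding the symbols of $\sigma'$ flips exactly those coordinates. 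Hence $d$ is a function of $(\pi,\sigma')$ stored in the control and of the currently observed parity $\pi'$, so $\delta$ depends on the window only through its parity.

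\textbf{The transition.} The transition of $\HIST$ on control $(q,\pi,\sigma')$ and observed parity $\pi'$ does the following:
\begin{enumerate}
\item compute the front symbol $d$ as above;
\item look up $\delta_{\PM}(q,d)=(q'',\sigma'')$;
\item output $\sigma''$ and move to control $(q'',\pi',\sigma'')$.
\end{enumerate}
For the very first step the front symbol is known to be $\triangleright$, so no parity difference is needed. The start flag tells $\delta$ to use $\triangleright$ directly, and the control then records the observed $\pi'=\Pi(x)$ for use at the next step. The halting state maps to $q_{\mathrm{halt}}$.

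\textbf{Induction.} With the transition defined this way, induction on $t$ shows two things. Before step $t+1$, the window equals $\tail(C_t)$, and the control holds $q_t$ together with enough information to recover the front symbol of $C_t$. Consequently each HIST step performs exactly the dequeue and enqueue of the Post machine step.

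\textbf{Main obstacle.} The step I expect to need care is degeneracy when the window is empty, i.e.\ when $|C_t|=1$. In that case no symbol leaves the window, the formula $\mathbf e_d$ breaks, and the front head would move past the rear head.

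I would handle this by first normalizing the Post machine so that its queue always has length at least $2$. One way is to have it keep an extra padding symbol that it re-enqueues every time it is dequeued. Alternatively, the Post machines coming from \cref{thm:pmtm} already keep a queue of length $s\ge 2$, possibly after padding the input. Under this normalization the dropped symbol always exists, and the parity-difference identity is exact.
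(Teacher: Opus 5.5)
Your construction is correct and is essentially the paper's proof. The paper keeps the single vector $r=\Pi(C_t)$ in the control, updated as $r\gets p\oplus\Pi(a)$; this is exactly your $\pi\oplus\Pi(\sigma')$. The paper also uses a start flag for the initial $\triangleright$.

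The degenerate case you flag is not actually an obstacle, for three reasons: $\Pi(C)=\mathbf e_c\oplus\Pi(\tail(C))$ holds for every nonempty queue $C$ with front symbol $c$; the queue never empties, since each step deletes one symbol and appends at least one; and when $\tail(C_t)$ is empty your formula correctly returns the first symbol of $\sigma'$. So drop the padding normalization, because it would alter the simulated machine and forfeit the exact step-by-step claim for every Post machine.
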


\begin{proof}
Let
$
    \PM=(\Sigma,Q_{\PM},\delta_{\PM},q_{\mathrm{start}},q_{\mathrm{halt}})
$
be a Post machine.  Recall that the Post-machine queue is initialized as \(\triangleright x\), while the HIST window is initialized as \(x=\tail(\triangleright x)\).  The missing front symbol \(\triangleright\) is handled by a one-bit flag in the finite control.

We construct a parity-HIST machine over the same alphabet \(\Sigma\).  Its finite control stores three pieces of information: the current Post-machine state, the parity of the current Post-machine queue, and a one-bit flag indicating whether the first transition is being simulated.  Formally, define
\[
    Q_{\HIST}
    =
    \bigl(Q_{\PM}\times\zo^\Sigma\times\zo\bigr)
    \cup \{q_{\mathrm{halt},\HIST}\}.
\]
The initial HIST state is
$
    (q_{\mathrm{start}},\mathbf 0,1),
$
where the last bit records that the first deleted Post-machine symbol is known to be \(\triangleright\).  
The halting state is \(q_{\mathrm{halt},\HIST}\).

We now define the parity-HIST transition.  Suppose the current HIST state is \((q,r,b)\), where \(q\in Q_{\PM}\), \(r\in\zo^\Sigma\), and \(b\in\zo\).  Let
$
    p=\hist(W)\bmod 2
$
be the parity vector of the current HIST window \(W\).
If \(b=1\), then this is the first simulated Post-machine step, and the deleted queue symbol is
$
    \triangleright.
$
If \(b=0\), then \(r\) is the parity of the current Post-machine queue \(C\), while \(p\) is the parity of \(\tail(C)\).  Therefore,
$
    r\oplus p=\mathbf e_\sigma,
$
where \(\sigma\) is the front symbol of \(C\).  On configurations that do not satisfy this invariant, define the transition arbitrarily.

Having recovered \(\sigma\), the HIST machine applies the Post-machine transition
$
    \delta_{\PM}(q,\sigma)=(q',a).
$
The HIST machine appends exactly the same sequence \(a\).  The parity of the next Post-machine queue is
$
    p\oplus \Pi(a),
$
because the next queue is \(W\concat a\).  Thus, if \(q'=q_{\mathrm{halt}}\), the next HIST state is \(q_{\mathrm{halt},\HIST}\); otherwise, the next HIST state is
$
    \bigl(q',\,p\oplus\Pi(a),\,0\bigr).
$
This transition depends on the current window \(W\) only through \(p=\hist(W)\bmod 2\), so the constructed machine is indeed a parity-HIST machine.

We prove correctness by induction on the number of simulated Post-machine steps.  Let \(C_t\) be the queue of $\msf{PM}$ after \(t\) steps and let $W_t$ denote the window of $\msf{Hist}$ after $t$ steps. Initially,
$
    C_0=\triangleright x,
    W_0=x=\tail(C_0).
$
The first HIST transition uses the flag \(b=1\) to recover the deleted symbol \(\triangleright\), applies the first transition of \(\PM\), and appends the same word \(a_0\).  Hence
$
    W_0\concat a_0
    =
    x\concat a_0
    =
    C_1.
$
After the HIST window update, the new window is \(\tail(C_1)\), as required.

For the induction step, assume that before simulating the \((t+1)\)-st Post-machine step, the HIST window is
$
    W_t=\tail(C_t),
$
the simulated Post-machine state is correct, and the stored parity is \(r=\Pi(C_t)\).  Write
$
    C_t=\sigma\concat W_t,
$
where \(\sigma\) is the front symbol of the Post-machine queue.  Since the HIST machine observes
$
    p=\Pi(W_t),
$
it recovers
$
    \mathbf e_\sigma
    =
    r\oplus p.
$
Therefore, it applies the same Post-machine transition
$
    \delta_{\PM}(q,\sigma)=(q',a_t)
$
and appends the same word \(a_t\).  The Post-machine queue after this step is
$
    C_{t+1}=W_t\concat a_t,
$
which is exactly the word obtained by appending \(a_t\) to the current HIST window.  After the automatic HIST window update, the next HIST window is
$
    \tail(W_t\concat a_t)=\tail(C_{t+1}).
$
The stored parity is updated to
$
    p\oplus\Pi(a_t)=\Pi(C_{t+1}),
$
so the induction invariant is preserved.  This completes the proof.
\end{proof}

\begin{corollary}\label{cor:hist-tc}
Parity-HIST machines are Turing complete.
\end{corollary}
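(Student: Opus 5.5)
The corollary follows by composing the two simulations already available: \cref{thm:pmtm} turns any Turing machine into a Post machine, and \cref{thm:pm-to-hist} turns any Post machine into a parity-HIST machine. Concretely, I would fix a single-tape Turing machine $M$. I would then invoke \cref{thm:pmtm} to obtain an equivalent Post machine $\PM$, which is a time-$O(t\cdot s)$ simulation with an $s$-size queue. Finally, I would apply \cref{thm:pm-to-hist} to $\PM$ to obtain a parity-HIST machine $\HIST$ that simulates $\PM$ step by step.

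The steps I would carry out, in order, are as follows. First, I would check the input conventions: the initial queue of $\PM$ is $\triangleright x$, while the initial HIST window is $x$, and \cref{thm:pm-to-hist} already accounts for this through its flag bit. Second, I would argue halting equivalence. By the invariant of \cref{thm:pm-to-hist}, after $t$ simulated steps the simulated Post-machine state is correct and the window equals $\tail(C_t)$. Hence $\HIST$ enters $q_{\mathrm{halt},\HIST}$ exactly when $\PM$ enters $q_{\mathrm{halt}}$, and it does so after the same number of steps. Third, I would argue output equivalence. Whatever output convention \cref{thm:pmtm} uses (for example, the accepting state, or the queue contents at halting), that output is recoverable from $\HIST$'s configuration: at halting, the queue is the last window plus the single front symbol $\sigma$. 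I would either fold $\sigma$ into the halting state or adopt an accept/reject halting-state convention. Fourth, I would note the cost: the total running time is $O(t\cdot s)$, and the tape region that is ever relevant is bounded by the queue size plus the number of appended symbols.

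The only mildly delicate point is the output convention. \cref{thm:pm-to-hist} collapses all halting configurations into the single state $q_{\mathrm{halt},\HIST}$, which discards the last recovered front symbol. To handle this, I would either refine the halting state into $q_{\mathrm{halt},\HIST}^{\sigma}$, or observe that it suffices to treat Turing completeness as deciding (or semi-deciding) languages via halting plus a final state bit. Either change is a trivial modification of the finite control. Everything else is direct composition, so I expect no real obstacle.
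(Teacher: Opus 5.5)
Your proposal is correct and takes essentially the paper's approach: the paper's proof is the one-line composition of Theorem~\ref{thm:pmtm} with Theorem~\ref{thm:pm-to-hist}, and your input-convention and halting checks are what that composition implicitly relies on. One slip in your optional output-convention aside: the final Post-machine queue is $W_t\concat a_t$ while the final HIST window is $\tail(W_t\concat a_t)$, so the symbol missing from the window is the first symbol of $W_t\concat a_t$, not the already-dequeued $\sigma$. Storing $\sigma$ in the halting state therefore does not reconstruct that queue, but the missing symbol remains on the never-overwritten tape just left of the front head, and the halting-based convention you also offer needs none of this.
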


\begin{proof}
Combine \cref{thm:pm-to-hist} with \cref{thm:pmtm}.
\end{proof}

\section{Simulating the HIST Model with a Sliding-Window Transformer}\label{sec:tf-sim}
We now show that every Parity-HIST machine can be implemented by a sliding-window transformer without positional encodings, in the sense of the transformer definition in Section~2.  The construction uses the adaptive update bit from the transformer model: if a HIST transition appends one symbol, the transformer emits one token with update bit \(0\); if a HIST transition appends two symbols, the transformer emits the first token with update bit \(1\) and the second token with update bit \(0\).  Thus a two-symbol append is implemented as one append-without-deletion update followed by one ordinary sliding update.

The only numerical issue is parity extraction.  Uniform attention gives normalized histogram coordinates, not raw counts.  We therefore introduce a special marker symbol \(\#\).  The simulation maintains the invariant that the context contains exactly one marker, except immediately after the marker has slid out and before it is refreshed.  When exactly one marker is present, its normalized histogram mass is \(1/|C|\), so the model can infer \(|C|\) and convert normalized symbol masses into raw counts.  With \(O(\log S)\)-bit inference precision for a run whose HIST window length is at most \(S\), the parity of each count can be recovered.

\begin{theorem}[Simulating Parity-HIST by sliding-window transformers]\label{thm:transformer-simulates-hist}
Every Parity-HIST machine can be faithfully simulated by a sliding-window transformer without positional encodings.  More precisely, for every execution in which the HIST window length is at most \(S\), the construction uses context length at most \(S+1\) and \(O(\log S)\)-bit inference precision.
\end{theorem}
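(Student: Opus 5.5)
We build a single-layer, single-head transformer whose vocabulary is a finite set of \emph{extended tokens}. Each token carries several pieces of data: a tape symbol $\sigma\in\Sigma\cup\{\#\}$, a copy of the HIST control state $q$, and the update bit $\gamma$. It also carries the parity vector $p\in\zo^\Sigma$ of the window at the moment the token was generated, and possibly a pending second symbol together with a phase flag. The reason for storing the state in the newest token is that, without PE, the only place the transformer can read its own control state is its current position $i$. The current position is always the last generated token, and its embedding $h_i^0$ is visible to the MLP through the residual stream. So the finite control of the parity-HIST machine is stored in the token itself. Exactly as in the proof of \cref{thm:pm-to-hist}, we maintain the invariant that at the start of each simulated step the last token encodes $(q,r,b)$. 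The MLP computes $\delta(q,p)$, where $p$ is the parity of the window, and then emits the next token.

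The attention layer is used only to aggregate the histogram. We make all logits equal by setting $Q=0$, so attention is uniform over the context $C$. We let $V h_j^0$ be the one-hot vector $\mathbf e_{\sigma_j}$ placed in reserved coordinates. Then $a_i=\hist(C)/|C|$, including the coordinate for $\#$. We maintain the invariant that $C$ contains exactly one $\#$. Under this invariant the MLP reads $m=1/|C|$ and each ratio $h_\sigma/|C|$. It recovers $h_\sigma\bmod 2$ by computing $h_\sigma=(h_\sigma/|C|)\cdot|C|$. The product is taken with $|C|\le S+1$, so $O(\log S)$ bits of precision separate consecutive integers. The parity is then extracted with a ReLU sawtooth of $O(\log S)$ pieces, or alternatively a precomputed piecewise-linear function on the grid $\{k/|C|\}$.

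A constant-size ReLU network cannot directly take products and sawtooths on unbounded ranges. This is the main obstacle, and I would handle it first. My plan is to let the MLP depend on $S$, with $O(\log S)$ precision, but to keep the number of learnable parameters constant by using bit-extraction gadgets. Under the paper's precision model, this is what Remark~\ref{rem:precision} isolates.

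The marker $\#$ must also be maintained. We modify the HIST window so that $\#$ occupies a known slot. When $\#$ is about to slide out, the model observes this: the token with update bit $0$ that evicts the front symbol is detected by checking whether the $\#$ coordinate of the current attention output is zero. The model then re-emits $\#$ with $\gamma=1$ so the window length is preserved. The dropped $\#$ must not corrupt the parity reasoning, so we exclude the $\#$ coordinate from $p$. The HIST state simply pauses during such refresh steps, and a phase flag in the token records this. During a refresh step the count of $\#$ is zero, so $|C|$ is unknown. This is harmless, because refresh steps need only the fact that the $\#$ mass is zero, not any parities.

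Finally, two-symbol appends are handled as described before the theorem. The first token is emitted with $\gamma=1$ and carries the pending second symbol in its state fields. The next step emits the second symbol with $\gamma=0$ and the updated state. We verify by induction on steps that the transformer's window, with $\#$ and refresh tokens removed, equals the HIST window, and that the last token encodes the HIST state. The window length is at most $S+1$, counting the single marker.
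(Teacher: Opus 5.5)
Your architecture otherwise matches the paper's: uniform attention via $Q=K=0$, a value map producing the normalized histogram, and a single marker $\#$ whose mass $1/|C|$ converts normalized masses into counts. The HIST state is carried by the newest token, and two-symbol appends are split into a $\gamma=1$ token followed by a $\gamma=0$ token. But the marker refresh is wrong as stated.

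The model can only notice the marker after it has been evicted. It is evicted by the $\gamma=0$ emission of some $\sigma_1$, and that deletion was supposed to remove the front symbol $w_1$ of $W_t=w_1\cdots w_k$. Because the deletion was spent on $\#$, the context right after the eviction is exactly $W_t\concat\sigma_1$, while the HIST window is $W_{t+1}=\tail(W_t\concat\sigma_1)$. Re-emitting $\#$ with $\gamma=1$ deletes nothing. So $\msf{sem}(C)=W_t\concat\sigma_1$ still contains the stale $w_1$, and the next parity is off by $\mathbf e_{w_1}$.

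The damage is permanent. Every later sliding step removes one symbol too few relative to HIST, the lag grows by one at each further marker cycle, and $|C|$ reaches $S+2$ and beyond. So both your inductive claim (``the window with $\#$ removed equals the HIST window'') and the $S+1$ bound fail. Your justification is also backwards: the $\gamma=0$ emission that evicted $\#$ already kept $|C|$ unchanged, and it is $\gamma=0$, not $\gamma=1$, that keeps it unchanged again.

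The repair, which is the paper's, is to emit the refresh token $(\#,r_i,0)$ with update bit $0$. The refresh step then performs the deferred deletion, giving $\msf{sem}(C)=\tail(W_t\concat\sigma_1)=W_{t+1}$ with exactly one marker. Accordingly, the induction must allow an intermediate marker-free context satisfying $\tail(\msf{sem}(C))=W_{t+1}$ (the paper's ``refresh form''), rather than asserting window equality after every token.

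Two minor points. First, the marker has no ``known slot'', and the model never needs its position, only its count. Second, for an arbitrary parity-HIST machine the newest token only needs $q\in Q_{\HIST}$ and a pending symbol, since the parity is recomputed by attention at each step. The stored parity vector and the $(q,r,b)$ format belong to the specific machine built in \cref{thm:pm-to-hist}, not to this theorem.
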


\begin{proof}[Construction]
Let
$
    \HIST=(\Sigma,Q_{\HIST},\delta,q_{\mathrm{start}},q_{\mathrm{halt}})
$
be a Parity-HIST machine.  Let
$
    \Sigma_{\#}:=\Sigma\cup\{\#\},
$
where \(\#\notin\Sigma\) is a marker symbol used only by the transformer simulation.

\paragraph{Vocabulary.}
The transformer vocabulary is
$
    \mathcal V=\Sigma_{\#}\times R\times\zo.
$
A token \(v=(\tau,r,\gamma)\in\mathcal V\) consists of a displayed symbol \(\tau\in\Sigma_{\#}\), a control label \(r\in R\), and an update bit \(\gamma\in\zo\).  The control-label set is
\[
    R=
    \{\msf{N}(q):q\in Q_{\HIST}\}
    \cup
    \{\msf{P}(q,\sigma):q\in Q_{\HIST},\ \sigma\in\Sigma\}.
\]
The normal label \(\msf{N}(q)\) means that the next transformer generation step should simulate a HIST transition from state \(q\).  The pending label \(\msf{P}(q,\sigma)\) means that the first symbol of a two-symbol HIST append has already been emitted, the simulated HIST state has become \(q\), and the next transformer token must emit the pending second symbol \(\sigma\).

For a context
\[
    C=(v_{i-m+1},\ldots,v_i),
    \qquad
    v_j=(\tau_j,r_j,\gamma_j),
\]
define \(\msf{sem}(C)\in\Sigma^*\) to be the word obtained by deleting every marker symbol \(\#\) from \((\tau_{i-m+1},\ldots,\tau_i)\), while preserving the order of all symbols in \(\Sigma\).  The marker is not part of the simulated HIST window.

If the initial HIST window is \(W_0=w_1\cdots w_m\), then the transformer prompt is
\[
    (w_1,\msf{N}(q_{\mathrm{start}}),0),\ldots,
    (w_m,\msf{N}(q_{\mathrm{start}}),0),
    (\#,\msf{N}(q_{\mathrm{start}}),0).
\]
Here \(W_0\) denotes the HIST window defined in \cref{sec:hist}, namely the tape segment excluding the front-head cell; in particular, on raw input \(x\), the initial HIST window is \(W_0=x\).  Thus the initial semantic projection is \(W_0\), and the newest token carries the initial control label.

\paragraph{Embeddings and attention matrices.}
Let \(N=|\Sigma_{\#}|\) and 
write
$
    D=2N+ |R| + 2.
$
The embedding has four blocks: a displayed-symbol block, a histogram block, a control-label block, and an update-bit block.  For \(v=(\tau,r,\gamma)\), define
\[
    \emb(v)= ( \mathbf e_{\tau}^{N},  \mathbf 0^{N}, \mathbf{e}^{|R|}_r, \mathbf{e}_{\gamma + 1}^2)^\top 
    \in\R^D.
\]
There are no positional encodings, so at generation step \(i\), for every token \(v_j\) in the current context, we have 
$
  h_j^0=\emb(v_j).
$

We choose the self-attention matrices as follows.  The query and key matrices are all-zero matrices:
$
    \mathbf Q_{\mathrm{att}}= 
    \mathbf K_{\mathrm{att}}=\mathbf 0^{D\times D}.
$
The value matrix copies the displayed-symbol block into the histogram block and zeros all other blocks:
\[
    \mathbf V_{\mathrm{att}} \cdot \emb(\tau,r,\gamma)=
  ( \mathbf 0^N,  \mathbf{e}_{\tau}^{N}, \mathbf{0}^{|R|}, \mathbf{0}^2)^\top.
\]

Suppose that the current context at generation step \(i\) is
\[
    C_i=(v_{i-m+1},\ldots,v_i),
    \qquad
    v_j=(\tau_j,r_j,\gamma_j), 
\]
By the construction of the embedding layer, 
$
    h_j^0
    = (\mathbf{e}_{\tau_j}^N, \mathbf{0}^N, r_j,  \mathbf{e}_{\gamma_j + 1}^2)^\top
$
for $j = i - m + 1, \dots, i$.
Since \(\mathbf Q_{\mathrm{att}}=\mathbf K_{\mathrm{att}}=0\), every attention logit in the current context is 0 and hence the softmax weights are uniform:
$
    \alpha_{i,j}=\frac1m,
$ for all $ j=i-m+1,\ldots,i$.
The attention output at the newest position is
\[
\begin{aligned}
    a_i
    =\sum_{j=i-m+1}^{i}\alpha_{i,j}\mathbf V_{\mathrm{att}}h_j^0  
    = (\mathbf{0}^N, A_i, \mathbf{0}^{|R|}, \mathbf{0}^2)^\top.
\end{aligned}
\]
where
$
    A_i(\rho)=\frac{\#\{j\in\{i-m+1,\ldots,i\}:\tau_j=\rho\}}{m}, \ \rho\in\Sigma_{\#}.
$
Using the residual connection in the self-attention layer, the representation after attention is
\[
    h_i^{0.5}=a_i+h_i^0
    =  ( \mathbf e_{\tau}^{N}, A_i,  \mathbf{e}_{r_i}^{|R|},  \mathbf{e}_{\gamma_i + 1}^2)^\top.
\]
Thus, the second block of \(h_i^{0.5}\) is the normalized displayed-symbol histogram of the current context, while the third block retains the control label of the newest token.

\paragraph{The MLP layer.}
If the context contains exactly one marker, then \(A_i(\#)=1/m\).  Hence, for every \(\sigma\in\Sigma\),
\[
    \operatorname{cnt}_i(\sigma)
    :=\frac{A_i(\sigma)}{A_i(\#)}
    =\#\{j\in\{i-m+1,\ldots,i\}:\tau_j=\sigma\}.
\]
With \(O(\log S)\)-bit precision for counts up to \(S+1\), the MLP recovers
\[
    p_i=\bigl(\operatorname{cnt}_i(\sigma)\bmod 2\bigr)_{\sigma\in\Sigma}
    =\hist(\msf{sem}(C_i))\bmod 2.
\]
If \(A_i(\#)=0\), the marker is absent, and the MLP uses the marker-refresh rule below instead of evaluating a HIST transition.

Since normalize histogram $A_i$ and the window length $m$ can be read off from $h_{i}^{0.5}$, the MLP implements a deterministic map \(G\) that readily simulates the transition of $\msf{Hist}$
and maintain that there is always exactly one $\#$ symbol in the context window.
The specific definition of $G$ and the proof that the generated sequence simulates the HIST execution are deferred to \cref{app:tf-proof}.
\end{proof}

\begin{corollary}\label{cor:no-pe-tc}
Sliding-window transformers without positional encodings are Turing complete.  More precisely, if a single-tape Turing machine runs in time \(t\) and space \(s\), then the transformer simulation uses \(O(ts)\) generated tokens and context length \(O(s)\).
\end{corollary}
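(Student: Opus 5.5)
The plan is to chain the three simulations already set up in the paper: Turing machine to Post machine (\cref{thm:pmtm}), Post machine to parity-HIST machine (\cref{thm:pm-to-hist}), and parity-HIST machine to a sliding-window transformer without positional encodings (\cref{thm:transformer-simulates-hist}). Turing completeness follows by composing the simulations. The quantitative bounds then follow by tracking how each stage inflates time and space.

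\textbf{Stage 1: Turing machine to Post machine.} Given a single-tape TM running in time \(t\) and space \(s\), \cref{thm:pmtm} gives an equivalent Post machine \(\PM\) that runs for \(O(ts)\) steps. Its queue has size at most \(s\), which I will take as \(O(s)\), allowing for the start symbol and the constant-size slack from appending strings in \(\Sigma^2\).

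\textbf{Stage 2: Post machine to parity-HIST machine.} Applying \cref{thm:pm-to-hist} yields a parity-HIST machine that performs exactly one HIST step per Post-machine step. Before the \((t+1)\)-st step its window is \(\tail(C_t)\), where \(C_t\) is the queue after \(t\) steps. So the HIST window length is bounded by the queue length, giving \(S=O(s)\). Moreover, the HIST machine enters \(q_{\mathrm{halt},\HIST}\) exactly when \(\PM\) halts. The output can be read off in the same way as from \(\PM\), since the window is \(\tail\) of the queue and the front symbol is recoverable from the stored parity.

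\textbf{Stage 3: parity-HIST machine to transformer.} \cref{thm:transformer-simulates-hist} gives a sliding-window transformer without PE whose context length is at most \(S+1=O(s)\), using \(O(\log S)\)-bit precision. The step I must check is that the number of generated tokens per HIST step is bounded by a constant:
\begin{itemize}
\item a one-symbol append costs one token;
\item a two-symbol append costs two tokens, one with update bit \(1\) and one with update bit \(0\);
\item the marker-refresh steps that reinsert \(\#\) after it slides out add \(O(1)\) tokens.
\end{itemize}
Once per-step overhead is a constant \(c\), the total number of generated tokens is at most \(c\cdot O(ts)=O(ts)\).

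\textbf{Main obstacle.} The marker refresh is the delicate point, because it is deferred to the appendix. I need two facts about it. First, it occurs only when the marker has just been evicted, which happens at most once per full pass of the window. Even a worst-case bound of one refresh per HIST step still suffices for the \(O(ts)\) claim. Second, each refresh adds \(O(1)\) tokens and grows the context by at most one beyond \(S\), consistent with the \(S+1\) bound. Granting these properties of the deferred map \(G\), the composition gives the corollary.
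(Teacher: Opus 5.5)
Your proposal is correct and is essentially the paper's proof, which simply composes \cref{thm:pmtm}, \cref{thm:pm-to-hist}, and \cref{thm:transformer-simulates-hist}; your token accounting also matches the appendix, where each HIST step costs one or two emitted tokens plus at most one marker-refresh token. Two harmless imprecisions do not affect the corollary: a refresh token has update bit $0$, so it never lengthens the context (the $+1$ in $S+1$ is the marker slot itself), and the halting state $q_{\mathrm{halt},\HIST}$ stores no parity, so the construction as written does not support your side remark about recovering the front symbol at halting.
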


\begin{proof}
Combine \cref{thm:pmtm,thm:pm-to-hist,thm:transformer-simulates-hist}.
\end{proof}


\section{Conclusions and Limitations}\label{sec:conclusion}
We have shown that positional encoding is not necessary for the computational universality of Transformers in the sliding-window autoregressive regime. Although a Transformer without PE is permutation-invariant with respect to the tokens inside a single window, the temporal motion of the window provides an additional source of sequential structure. This FIFO-like structure is sufficient to simulate the dequeue operation of a Post machine, and hence to achieve Turing completeness.

Our result should not be interpreted as showing that Transformers without PE can recover exact token positions inside the context window. Our construction deliberately avoids positional addressing by introducing the parity-HIST model, where the transition depends only on a finite control state and the parity of the current window histogram. The result, therefore, shows that exact positional access is stronger than what is needed for universal computation: a permutation-invariant window summary, together with finite control and sliding-window dynamics, already suffices.

An interesting future direction is to investigate whether the $s$-factor slowdown in our construction is inherent. Our result shows that positional encoding is not necessary for universal computation, but it remains unclear whether removing PE necessarily incurs an efficiency cost. Establishing either a faster simulation or a matching lower bound would further clarify the computational role of positional information in transformers.



{\small
\bibliographystyle{alpha}
\bibliography{reference,reference2}
}

\appendix

\section{Correctness of the Transformer Simulation}\label{app:tf-proof}

\paragraph{The MLP layer.}
The MLP layer implements a map $G$ defined as follows.
For valid representations \(h_i^{0.5}\), \(G\) is defined as follows.

First, if \(A_i(\#)=0\), the marker has just slid out.  If the newest control label is \(r_i\), set
\[
    G(h_i^{0.5})=(\#,r_i,0).
\]
Second, if \(A_i(\#)>0\) and \(r_i=\msf{P}(q,\sigma)\), set
\[
    G(h_i^{0.5})=(\sigma,\msf{N}(q),0).
\]
Third, if \(A_i(\#)>0\) and \(r_i=\msf{N}(q)\), then the MLP evaluates the parity-HIST transition
$
    \delta(q,p_i)=(q',a),
$
where $a \in \Sigma \cup \Sigma^2$.
If \(a=\sigma_1\) has length one, set
$
    G(h_i^{0.5})=(\sigma_1,\msf{N}(q'),0).
$
If \(a=\sigma_1\sigma_2\) has length two, set
$
    G(h_i^{0.5})=(\sigma_1,\msf{P}(q',\sigma_2),1).
$
The case \(q=q_{\mathrm{halt}}\) can be assigned any fixed self-loop behavior, since the simulation is read when the halting control label is reached.

We choose the MLP output so that
\[
    f_{\mathrm{mlp}}(h_i^{0.5})=\emb(G(h_i^{0.5}))-h_i^{0.5}.
\]
Consequently the residual connection in the MLP layer gives
\[
    h_i^1=f_{\mathrm{mlp}}(h_i^{0.5})+h_i^{0.5}
    =\emb(G(h_i^{0.5})).
\]
Finally, the next token is
\[
    v_{i+1}
    =\operatorname*{argmax}_{v\in\mathcal V}\emb(v)^\top h_i^1
    =G(h_i^{0.5}).
\]
The last equality is unique: \(\emb(G(h_i^{0.5}))\) has one nonzero coordinate in each of the displayed-symbol, control-label, and update-bit blocks, and no vocabulary token other than \(G(h_i^{0.5})\) matches all three blocks.

\paragraph{Correctness of Simulation.}
Let
\[
    W_0,W_1,W_2,\ldots
\]
be the sequence of HIST windows, and let \(q_t\) be the HIST state before the transition from \(W_t\).  For every non-halting step, write
\[
    \delta(q_t,\hist(W_t)\bmod 2)=(q_{t+1},a_t),
    \qquad a_t\in\Sigma\cup\Sigma^2.
\]
By the definition of the HIST update, the next HIST window is
\[
    W_{t+1}=\tail(W_t\concat a_t).
\]

We prove that the transformer maintains the following invariant.  At the beginning of the simulation of HIST step \(t\), the context is in one of two forms:
\begin{enumerate}
    \item \emph{Ready form:} the context contains exactly one marker, \(\msf{sem}(C)=W_t\), and the newest token carries the normal label \(\msf{N}(q_t)\);
    \item \emph{Refresh form:} the context contains no marker, the newest token carries the normal label \(\msf{N}(q_t)\), and \(\tail(\msf{sem}(C))=W_t\).  The next transformer generation step is a marker-refresh step.  This form can occur only immediately after the previous simulated HIST update removed the marker.
\end{enumerate}
The prompt is in ready form for \(t=0\), because it encodes the initial HIST window and appends one marker token carrying \(\msf{N}(q_{\mathrm{start}})\).

Suppose first that the context is in refresh form.  By construction, \(A_i(\#)=0\), so the MLP outputs \((\#,\msf{N}(q_t),0)\).  The update bit \(0\) deletes the oldest semantic token and appends a new marker.  Hence the new semantic projection is \(\tail(\msf{sem}(C))=W_t\).  The resulting context contains exactly one marker and has newest label \(\msf{N}(q_t)\).  Thus the context is restored to ready form.

Now assume the context is in ready form for step \(t\).  The attention computation in the body gives
\[
    h_i^{0.5}
    =
    \begin{pmatrix}
        \mathbf e_{\tau_i}^{N}\\
        A_i\\
        \mathbf{e}_{\msf{N}(q_t)}\\
        \mathbf{e}_{\gamma_{i} + 1}
    \end{pmatrix},
\]
where \(A_i\) is the normalized displayed-symbol histogram of the current context.  Since the context contains exactly one marker, the MLP recovers
\[
    \hist(\msf{sem}(C))\bmod 2
    =\hist(W_t)\bmod 2.
\]
It therefore evaluates the same transition as the parity-HIST machine:
\[
    \delta(q_t,\hist(W_t)\bmod 2)=(q_{t+1},a_t).
\]

If \(a_t=\sigma_1\) has length one, the transformer emits
$
    (\sigma_1,\msf{N}(q_{t+1}),0).
$
After the adaptive update, there are two possibilities.  If the removed token is a semantic token, then the marker remains present and the new semantic projection is
\[
    \tail(W_t\concat\sigma_1)=W_{t+1}.
\]
The context is ready for step \(t+1\).  If the removed token is the marker, then the context has no marker and semantic projection \(W_t\concat\sigma_1\).  The next transformer step is the refresh step described above; after it, the semantic projection becomes
\[
    \tail(W_t\concat\sigma_1)=W_{t+1},
\]
and the context is ready for step \(t+1\).

If \(a_t=\sigma_1\sigma_2\) has length two, the transformer first emits
\[
    (\sigma_1,\msf{P}(q_{t+1},\sigma_2),1).
\]
Because the update bit is \(1\), no token is removed.  The marker remains present, the semantic projection becomes \(W_t\concat\sigma_1\), and the newest token carries the pending label \(\msf{P}(q_{t+1},\sigma_2)\).  On the next generation step, the pending rule emits
\[
    (\sigma_2,\msf{N}(q_{t+1}),0),
\]
without recomputing the HIST transition.  After this sliding update, if a semantic token is removed, then the marker remains present and the semantic projection is
\[
    \tail(W_t\concat\sigma_1\sigma_2)=W_{t+1}.
\]
If instead the marker is removed, then the context has no marker and semantic projection \(W_t\concat\sigma_1\sigma_2\); the subsequent refresh step appends a new marker and deletes the oldest semantic token, producing
\[
    \tail(W_t\concat\sigma_1\sigma_2)=W_{t+1}.
\]
In both cases, after at most one refresh step, the context is ready for step \(t+1\) and the newest label is \(\msf{N}(q_{t+1})\).

By induction, each HIST transition is simulated exactly.  A one-symbol append uses one transformer token plus possibly one refresh token, and a two-symbol append uses two transformer tokens plus possibly one refresh token.  Hence the number of generated tokens is linear in the number of HIST steps.  If the HIST window length is at most \(S\), then the transformer context length is at most \(S+1\): it consists of the semantic window plus one marker in ready form, and the intermediate append-without-deletion case for a two-symbol append is also bounded by \(S+1\) because the resulting HIST window length is at most \(S\).  This proves the claimed faithful simulation.
\end{document}